\newtheorem{prop}{Proposition}
\title{Probabilities Are All You Need: A Probability-Only Approach to \\ Uncertainty Estimation in Large Language Models}
\author{
    Manh Nguyen, Sunil Gupta, Hung Le
}
\begin{document}

\maketitle

\begin{abstract}
Large Language Models (LLMs) exhibit strong performance across various natural language processing (NLP) tasks but remain vulnerable to hallucinations, generating factually incorrect or misleading outputs. Uncertainty estimation, often using predictive entropy estimation, is key to addressing this issue. However, existing methods often require multiple samples or extra computation to assess semantic entropy. This paper proposes an efficient, training-free uncertainty estimation method that approximates predictive entropy using the responses' top-$K$ probabilities. Moreover, we employ an adaptive mechanism to determine $K$ to enhance flexibility and filter out low-confidence probabilities. Experimental results on three free-form question-answering datasets across several LLMs demonstrate that our method outperforms expensive state-of-the-art baselines, contributing to the broader goal of enhancing LLM trustworthiness.
\end{abstract}

\begin{links}
    \link{Code}{https://github.com/manhitv/PRO}
\end{links}

\section{Introduction}

Large language models (LLMs) \cite{achiam2023gpt, touvron2023llama, team2023gemini} have demonstrated remarkable capabilities in solving NLP tasks. However, they still suffer from hallucinations \cite{xiao2021hallucination, malinin2020uncertainty}, where LLMs generate outputs that are factually incorrect or misaligned with the provided context, even when given valid inputs. Therefore, detecting and mitigating hallucinations remains a significant challenge in the development of trustworthy AI systems.

Estimating the uncertainty of model generations is a promising direction to improve the reliability of LLMs \cite{manakul2023selfcheckgpt}. By quantifying the confidence of their outputs, uncertainty measures help identify cases where a model is likely to produce incorrect or misleading information. Existing uncertainty estimation has been widely explored and can be categorized into two main types: training-based and training-free methods. Training-based methods involve training a custom model or leveraging a pretrained model to assess the truthfulness of the output \cite{azaria2023internal, kuhn2023semantic, farquhar2024detecting, nikitin2024kernel, qiu2024semantic}. These methods often require access to low-level features of the LLMs, which may not always be available in practice. In contrast, training-free methods are simpler, as they rely only on token probabilities from the original LLM to estimate predictive entropy and log probability without requiring additional model training \cite{huang2025survey}. In this work, we focus on training-free approaches due to their simplicity and practical usability.

Early training-free approaches use log-probability to measure the model's uncertainty. \citet{manakul2023selfcheckgpt} propose several metrics based on token log-probabilities, including taking the average and max over them as baselines for uncertainty estimation. More advanced approaches use predictive entropy \cite{lindley1956measure} to characterize the total uncertainty in a model's output distribution. Predictive entropy is reliable because it captures the overall uncertainty in a model’s output, considering the entire probability distribution rather than just a single prediction. This measure has demonstrated robustness across diverse tasks, such as data-to-text generation \cite{xiao2021hallucination}, neural machine translation \cite{malinin2020uncertainty}, and abstractive summarization \cite{van2022mutual}. More recently, predictive entropy has been extended to question-answering tasks in the context of LLMs \cite{kuhn2023semantic, lin2023generating, duan2023shifting}. However, it is impossible to compute predictive entropy exactly due to the high sampling cost associated with generating all possible outputs. Several studies have attempted to refine entropy-based uncertainty estimation by incorporating token- or sentence-level weighting. For example, \citet{duan2023shifting} introduce token and sentence weights based on their importance within specific spans of text, such as a set of tokens within a sentence or a group of responses, and combine these weights into an improved version of predictive entropy. \citet{kuhn2023semantic} and \citet{farquhar2024detecting} propose semantic entropy (SE), which quantifies uncertainty by computing the predictive entropy of semantically related response clusters. This approach has opened new possibilities for uncertainty estimation by adjusting for semantic similarities. While these semantic methods have shown strong empirical performance, due to the inherent complexity of semantic meaning, accurately clustering responses remains challenging, particularly since a single sentence can belong to multiple clusters. Additionally, these methods frequently depend on external models, such as natural language inference (NLI) models \cite{he2020deberta}, to assess semantic distances between responses, yet the relationships between clusters in terms of semantic similarity remain unclear.

Recently, \citet{aichberger2024rethinking} introduce G-NLL, which estimates uncertainty using only the probability of the most likely generation  (i.e., the most likely complete sentence). This suggests that reliable estimation is possible without modeling response semantics. However, we argue that relying solely on the most likely generation fails to accurately approximate the uncertainty, resulting in suboptimal performance in certain cases.
In this work, we propose a novel training-free uncertainty estimation method, dubbed \textbf{PRO} (\textbf{PR}obability \textbf{O}nly), that is based entirely on the top \(K\) generations with the highest probabilities. Unlike existing semantics-based methods, our approach does not require additional computations involving semantic meaning or response embeddings, making it both simple and cost-effective. 
We formulate uncertainty estimation as a predictive entropy (PE) approximation problem and theoretically demonstrate that PE can be estimated using the highest probabilities from sampled generations. Each generation's probability is computed using the Negative Log-Likelihood (NLL). The resulting uncertainty is further refined by applying a probability threshold, which filters out low-probability generations to ensure that only high-confidence responses contribute to the uncertainty measure, ultimately yielding the final top-$K$ probabilities.
We hypothesize that while a larger $K$ may improve entropy estimation, incorporating low-probability responses adds noise and contributes negligibly to the final entropy value.
In summary, our main contributions are as follows: 
\begin{itemize}
    \item We propose a simple, theoretically motivated entropy approximation for uncertainty estimation that uses only the top probabilities from sampled generations.
    \item We introduce a probability threshold as a hyperparameter to adaptively quantify uncertainty, enhancing both flexibility and empirical performance.
    \item We conduct extensive experiments to show that our method outperforms existing baselines for uncertainty estimation across various free-form question-answering tasks and LLMs.
\end{itemize}

\section{Related Work}



In LLM's uncertainty estimation literature, one common direction is to focus on analyzing the internal states of the LLMs and training external models to predict confidence and uncertainty. For example, \citet{azaria2023internal} train a model to assess the truthfulness of the output based on specific hidden layers. \citet{chuang2024lookback} instead rely on attention layers to compute a lookback ratio, which measures the model's focus on the provided context, arguing that it correlates positively with the correctness of the response. Additionally, \citet{chen2024inside} estimate uncertainty by analyzing eigenvalues of the response covariance matrix, capturing the semantic consistency in the dense embedding space. In contrast, our method relies solely on token probabilities rather than internal states, making it significantly simpler and more efficient. This eliminates the need for additional model training or complex computations on hidden layers, reducing computational costs and enabling broader applicability across different models.

Another prevalent approach is using logits or probability scores to estimate uncertainty. Predictive entropy is widely used for this purpose \cite{malinin2020uncertainty} as it quantifies the total uncertainty in a model's output distribution. Several studies have explored enhancing predictive entropy by incorporating weighting functions. \citet{duan2023shifting} and \citet{bakman2024mars} utilize a transformer-based model to compute weights at token, phrase, and sentence levels, while \citet{zhang2023enhancing} leverage named entity recognition (NER) models to detect and assign greater weights to key terms. These approaches are simple yet only show moderate performance in practice. 
In recent years, semantic-based uncertainty estimation has gained significant attention. \citet{kuhn2023semantic} and \citet{farquhar2024detecting} propose semantic entropy (SE), which quantifies uncertainty by considering semantic relationships between generated responses. Inspired by SE, \citet{lin2023generating} propose uncertainty metrics by extracting semantic matrices, which are constructed by analyzing the relationships among generated responses. Semantic entropy has been further extended in several ways. For example, \citet{nikitin2024kernel} apply kernel functions applied to a semantic graph, resulting in a generalized version of SE. \citet{qiu2024semantic} advance this approach by introducing semantic density (SD), which is computed from a generalized semantic space and directly measures uncertainty using kernel functions without the need for clustering. To the best of our knowledge, this method represents the state-of-the-art technique in the field of uncertainty estimation. Although promising, these methods typically incur high computational costs and depend on pretrained models to assess semantic distances. In contrast, our method introduces minimal computational cost by relying solely on the token probability provided by the language models, without the need for additional inference or semantic similarity calculations.

Recently, some studies have also used the likelihood of a single output sequence as a simple yet effective uncertainty estimation \cite{aichberger2024rethinking,vazhentsev2024unconditional}. Our work also aims to quantify uncertainty through the most probable generations. However, unlike prior approaches that rely on a single or a fixed number of responses, we introduce a thresholding mechanism for selecting top-$K$ generations adaptively. More importantly, we derive a general formulation for entropy approximation, offering a more flexible and theoretically grounded measure of uncertainty.

\section{Preliminaries}

\paragraph{Problem Statement}

Information theory \cite{lindley1956measure} provides a systematic approach to measure uncertainty by defining it as the entropy of the output distribution:

\begin{equation}
  \label{eq:pe_origin}
  PE(x) = H(Y|x) = -\int p(y|x)\log p(y|x)dy
\end{equation}
where \(Y\) is the output random variable, \(x\) is the input, and \(H(Y |x)\) is a conditional entropy which represents average uncertainty about \(Y\) when \(x\) is given. A low predictive entropy indicates a heavily concentrated output distribution, whereas a high one indicates that many possible outputs are similarly likely.

In the context of LLMs, we can measure the uncertainty of a generation as:
\begin{equation}
  \label{eq:pe_llm}
  U(x) = H(Y|x) = -\sum_{y} p(y|x)\text{log}p(y|x)
\end{equation}
where, in practice, $y\in \{y_1, y_2, ..., y_N\}$ is the finite set of generated sequences (i.e., the samples of $Y$) given prompt \(x\) using a specific LLM. 

\paragraph{Generation Probability} The probability of generating sequence \(y\) given a prompt \(x\) is typically factorized as the product of conditional probabilities over its individual tokens:

\begin{equation}
    \label{eq:token_product}
    p(y|x) = \prod_{t=1}^{T} p(y^t|y^{<t}, x)
\end{equation}
where \(T\) is the length of the generated sequence, and \(y^t\) is the token at position \(t\). Taking the logarithm, we get:

\begin{equation}
    \label{eq:token_log}
    \log p(y|x) = \sum_{t=1}^{T}\log p(y^t|y^{<t}, x).
\end{equation}

This quantity can be computed empirically using the \textbf{Negative Log-Likelihood (NLL)} \cite{aichberger2024rethinking}, which is commonly used as a loss function in training LLMs:

\begin{equation}
    \label{eq:nll}
    \text{NLL}(y|x) = -\sum_{t=1}^{T} \log p(y^t|y^{<t}, x).
\end{equation}

The token-level probabilities \(p(y^t|y^{<t}, x)\) are typically derived from the model's output logits using the softmax function, making this approach a standard method for evaluating the likelihood of generated text in LLMs. 

\section{Methodology}

\subsection{Uncertainty Approximation}
Calculating the uncertainty directly from Eq. \eqref{eq:pe_llm} is computationally expensive, as generating all possible responses \(Y\) for an accurate estimate would require significant resources. A typical way to estimate the predictive entropy is sampling \(N\) generations given prompt \(x\). Based on these samples, we propose selecting top-\(K\) ($K<N$) generations (\(y^*_1, y^*_2,..., y^*_K\)) with the highest probabilities to approximate uncertainty, i.e.:

\begin{align}
    \label{eq:topk_condition}
    p(y^*_1|x) &\geq p(y^*_2|x) \geq \dots \geq p(y^*_K|x), \\ 
    p(y^*_K|x) &\geq p(y_i|x), \forall i \in \{K+1, \dots, N\}.
\end{align}

For simplicity, we use 
\(p^*_i\) to represent the probability of the top \(i\)-th generation \(p(y^*_i|x)\). 
Given these notations, we introduce an approximation of PE as a \textbf{PR}obability-\textbf{O}nly uncertainty score (PRO):
\begin{equation}
        \label{eq:approx_final}  
        PRO(x) = -\log p^*_K - \sum_{i=1}^{K} p^*_i \log \frac{p^*_i}{p^*_K}
\end{equation}
Here, using Eq. \eqref{eq:token_log} and the standard NLL from Eq. \eqref{eq:nll}, we can write the probability of a particular generation as follows:

\begin{equation}
    \label{eq:p_nll}
    p(y|x) = e^{-\text{NLL}(y|x)}
\end{equation}

We theoretically establish that PRO serves as a lower bound for predictive entropy, as formalized in the following proposition:

\begin{prop}
    \label{prop:prop1}
    Let \( \mathbf{y}^* = (y^*_1, y^*_2, \dots, y^*_K) \) be the top \( K \) generations of a LLM given prompt \( x \). The predictive entropy approximation using the top $K$ probabilities satisfies the following inequality:
    \begin{equation}
        \label{eq:approx_final_prop}  
        H(Y|x) \geq -\log p^*_K - \sum_{i=1}^{K} p^*_i \log \frac{p^*_i}{p^*_K}
    \end{equation}
\end{prop}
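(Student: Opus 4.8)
The plan is to obtain \eqref{eq:approx_final_prop} by starting from the exact predictive entropy, keeping the contributions of the $K$ selected generations untouched, and replacing every remaining term by a single uniform lower bound. First I would record the elementary identity, valid for any $p^*_1,\dots,p^*_K\in(0,1]$,
\begin{equation}
-\log p^*_K - \sum_{i=1}^{K} p^*_i \log \frac{p^*_i}{p^*_K}
\;=\; -\sum_{i=1}^{K} p^*_i \log p^*_i \;-\; \left(1 - \sum_{i=1}^{K} p^*_i\right)\log p^*_K ,
\end{equation}
which follows by writing $\log(p^*_i/p^*_K)=\log p^*_i-\log p^*_K$ and collecting the $\log p^*_K$ terms. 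Thus it suffices to show $H(Y|x)\ge -\sum_{i=1}^{K} p^*_i \log p^*_i - \left(1 - \sum_{i=1}^{K} p^*_i\right)\log p^*_K$.

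For the inequality itself I would treat $H(Y|x)$ as the exact entropy of $p(\cdot\,|x)$, so that the probabilities sum to one, and split $H(Y|x) = -\sum_{i=1}^{K} p^*_i\log p^*_i \;-\; \sum_{y\notin\{y^*_1,\dots,y^*_K\}} p(y|x)\log p(y|x)$. The first sum already matches the target, so the whole claim reduces to bounding the tail sum below by $-\left(1 - \sum_{i=1}^{K} p^*_i\right)\log p^*_K$. Here the top-$K$ selection rule \eqref{eq:topk_condition} does the work: every non-selected generation satisfies $0\le p(y|x)\le p^*_K\le 1$, hence $-\log p(y|x)\ge -\log p^*_K\ge 0$, and multiplying this by $p(y|x)\ge 0$ gives the termwise bound $-p(y|x)\log p(y|x)\ge -p(y|x)\log p^*_K$. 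Summing over the tail and using $\sum_{y\notin\{y^*_1,\dots,y^*_K\}} p(y|x) = 1-\sum_{i=1}^{K} p^*_i$ produces exactly the required tail bound, and combining it with the identity above finishes the proof; the probability of each generation is, of course, evaluated via \eqref{eq:p_nll} in practice.

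The step I expect to need the most care is the probability-mass bookkeeping rather than any sharp inequality: the argument genuinely relies on $p(\cdot\,|x)$ being normalized, so that the tail mass equals the nonnegative quantity $1-\sum_{i=1}^{K} p^*_i$, and on $\log p^*_K\le 0$, so that simply dropping the tail (which would also yield a valid but looser lower bound) can be replaced by the sharper uniform estimate. I would therefore state explicitly that $H(Y|x)$ is the entropy of the full output distribution — equivalently, that the top $K$ are taken among all generations of nonzero probability — and check the two degenerate cases: if $p^*_K=1$ then $K=1$ and both sides equal $0$, and if the tail is empty then $1-\sum_{i=1}^{K}p^*_i=0$ and the correction term vanishes; in both, the single formula \eqref{eq:approx_final_prop} remains correct. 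It is also worth noting for context that equality holds precisely when $p(y|x)=p^*_K$ for every tail generation, which is what makes PRO a genuine lower bound on $H(Y|x)$ and motivates the subsequent thresholding on $p^*_K$.
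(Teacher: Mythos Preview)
Your proof is correct and follows essentially the same approach as the paper: split $H(Y|x)$ into the top-$K$ part and the tail, bound each tail term via $-\log p(y|x)\ge -\log p^*_K$ since $p(y|x)\le p^*_K$, replace the tail mass by $1-\sum_{i=1}^K p^*_i$, and then rearrange into the stated form. Your extra remarks on degenerate cases and the equality condition are sound but go beyond what the paper records.
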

\begin{proof}
See Appendix~\ref{sec:prop_proof}. 
\end{proof}

While this lower bound is computationally efficient, it still captures the essential characteristics of uncertainty. Furthermore, estimating uncertainty with a lower bound helps avoid being overly influenced by outlier responses, i.e., low-probability generations that contribute disproportionately to the entropy due to their high diversity. These rare responses may not meaningfully represent the model’s general behavior but can artificially inflate the uncertainty estimate. By concentrating on the top $K$ most likely responses, our method reduces sensitivity to such noise and provides a more stable approximation. In this sense, the lower bound offers a more concentrated view of uncertainty, emphasizing the model’s confidence in its most plausible outputs. Notably, when our approximation is high, it still indicates that the true predictive entropy \(H(Y\mid x)\) must also be high, indicating more uncertainty in the model's output. This is important for detecting highly uncertain cases, where the model is unsure and generates diverse responses.

When \(K=1\), the right-hand side of Eq. \eqref{eq:approx_final_prop} becomes the negative log-probability of the most likely generation, as in \citet{aichberger2024rethinking}. Although using the most likely generation demonstrates robustness in performance, it is not always sufficient to capture the distribution of possible responses, especially in cases of high uncertainty or ambiguous prompts. Relying only on the top-1 response may overlook other plausible responses, leading to a less accurate uncertainty estimation. Our method accounts for a broader set of likely responses, yielding a more representative and stable estimate of uncertainty. This benefit is empirically demonstrated in Section~\ref{subsec:multiple-k}, where using larger $K$ improves uncertainty estimation and leads to better performance on downstream tasks.

\subsection{On Top-$K$ Selection}

To select $K$, one could simply set a fixed number of $K<N$. However, we argue that a fixed $K$ is suboptimal because the probability distributions of LLM generations may include low-probability responses that are less reliable for uncertainty estimation. These low-probability responses can introduce noise, reducing the effectiveness of the uncertainty measure. To address this, we introduce an adaptive constraint that filters out low-probability responses that would otherwise dilute the quality of the approximation. This constraint is essential for ensuring that the uncertainty measure reflects only the most confident responses while avoiding noise from less probable, potentially irrelevant generations. 
The adaptive constraint is defined as follows:

\begin{equation}
    \label{eq:select_k}
    \mathbf{p}_K = \{ p_k \mid \ p_k \geq \alpha, 1 \leq k \leq N\},
\end{equation}
where \(\mathbf{p}_K\) is the selected set of top \(K\) probabilities, and \(\alpha\) ($0 \leq \alpha \leq p^*_1\leq 1$) is a tunable hyperparameter that truncates the output probability distribution \(p_k\). Larger \(\alpha\) entails more aggressive truncation, keeping only high-probability generations, whereas smaller \(\alpha\) allows generations with lower probabilities to be included. When \(\alpha=0\), we use all \(N\) generations, as no probability threshold is applied. On the other hand, when \(\alpha= p^*_1\), only the most likely generation ($K=1$) is selected.
In practice, we observe that the probability gap between the top response and the others can be significant, particularly when the top response is much more probable than the others. By setting $\alpha$ to avoid distortion caused by low-probability responses, this constraint ensures that only high-confidence generations are included in the uncertainty calculation. This approach is crucial for improving the reliability of the results. For a deeper understanding, including qualitative examples where our method captures uncertainty more effectively than fixed-$K$ baselines, and where probability gaps influence the retained generations, please refer to Appendix~\ref{app:examples}. The optimal value of $\alpha$ for each setting (model-dataset pair) is determined through a grid search on a small validation set comprising approximately 100 samples, with further details provided in the following section.

\section{Experiments and Results}

\subsection{Experimental Setup}

\begin{table*}[ht]
  \centering
  \begin{tabular}{cccccccccc}
    \toprule
    \textbf{Dataset} & \textbf{Model} & \textbf{SD} & \textbf{SE} & \textbf{Deg} & \textbf{NE} & \textbf{PE} & \textbf{ALL} & \textbf{NLL} & \textbf{PRO (Ours)}\\
    \midrule
    \multirow{5}{*}{TriviaQA} 
    & Gemma-2B      & 0.799 & 0.668 & 0.746 & 0.692 & 0.624 & 0.789 & \underline{0.806} & \textbf{0.819}       \\
    & Gemma-7B      & 0.831 & 0.690 & 0.715 & 0.702 & 0.652 & \underline{0.833} & 0.812 & \textbf{0.841}       \\
    & Llama2-13B    & \textbf{0.862} & 0.682 & \underline{0.802} & 0.551 & 0.552 & 0.624 & 0.684 & \underline{0.802}       \\
    & Falcon-11B    & \underline{0.706} & 0.592 & \textbf{0.710} & 0.555 & 0.604 & 0.577 & 0.668 & 0.668       \\
    & Falcon-40B    & 0.700 & \underline{0.724} & 0.722 & 0.674 & 0.623 & 0.658 & \textbf{0.765} & \textbf{0.765}       \\
    \midrule
    \multirow{5}{*}{SciQ}
    & Gemma-2B      & 0.719 & 0.570 & 0.725 & 0.601 & 0.605 & 0.719 & \underline{0.728} & \textbf{0.751}       \\
    & Gemma-7B      & 0.741 & 0.622 & 0.699 & 0.658 & 0.678 & \underline{0.765} & 0.755 & \textbf{0.787}       \\
    & Llama2-13B    & 0.706 & 0.574 & \textbf{0.720} & 0.481 & 0.543 & 0.515 & 0.600 & \underline{0.716}       \\
    & Falcon-11B    & 0.724 & 0.554 & 0.771 & 0.561 & 0.603 & 0.573 & \underline{0.797} & \textbf{0.799}      \\
    & Falcon-40B    & \underline{0.668} & 0.613 & 0.626 & 0.592 & 0.577 & 0.660 & \textbf{0.674} & \textbf{0.674}       \\
    \midrule
    \multirow{5}{*}{NQ}
    & Gemma-2B      & 0.618 & 0.599 & 0.620 & 0.600 & 0.613 & 0.607 & \underline{0.694} & \textbf{0.696}       \\
    & Gemma-7B      & 0.670 & 0.621 & \underline{0.691} & 0.662 & 0.566 & \textbf{0.698} & 0.683 & \underline{0.691}       \\
    & Llama2-13B    & 0.627 & 0.562 & 0.713 & 0.540 & 0.649 & 0.691 & \underline{0.737} & \textbf{0.740}      \\
    & Falcon-11B    & 0.636 & 0.591 & 0.580 & 0.515 & 0.522 & 0.512 & \underline{0.684} & \textbf{0.685}       \\
    & Falcon-40B    & 0.632 & 0.603 & 0.579 & 0.544 & 0.585 & 0.475 & \underline{0.638} & \textbf{0.645}       \\
    \midrule
    \textbf{Average AUC} & \textbf{} & {0.709} & {0.618} & {0.695} & {0.595} & {0.600} & {0.646} & {0.715} & \textbf{0.739}  \\
    \textbf{Best Count} & \textbf{} & {1} & {0} & {2} & {0} & {0} & {1} & {2} & \textbf{11}  \\
    \bottomrule

  \end{tabular}
  \caption{\label{tab:results}
    AUC performance comparison across various LLMs and datasets. The best scores for each setting are bolded, while the second-best scores are underlined. The last two rows summarize the average AUC and the total occurrences of the best scores for each setting, respectively.
  }
\end{table*}

We follow the same evaluation approach as related work by focusing on free-form question-answering tasks \cite{kuhn2023semantic, farquhar2024detecting, qiu2024semantic}.

\textbf{Datasets}. We perform experiments on three free-form question-answering datasets commonly used in the literature: TriviaQA \cite{joshi2017triviaqa}, SciQ \cite{welbl2017crowdsourcing}, and Natural Questions (NQ) \cite{kwiatkowski2019natural}. 

\textbf{Models}. We use five distinct open-source models from different families and sizes: Gemma-2B, Gemma-7B \cite{team2024gemma}, Llama2-13B \cite{touvron2023llama}, Falcon-11B, and Falcon-40B \cite{almazrouei2023falcon}.

\textbf{Baselines}. We compare our method against seven existing LLM uncertainty estimation methods: semantic density (SD) \cite{qiu2024semantic}, semantic entropy (SE) \cite{kuhn2023semantic}, degree (Deg) \cite{lin2023generating}, length-normalization predictive entropy (NE) \cite{malinin2020uncertainty}, predictive entropy (PE), average log likelihood (ALL) \cite{guerreiro2022looking}, and negative log likelihood (NLL) \cite{aichberger2024rethinking}. PE is computed using Eq.~\eqref{eq:pe_llm}.

\textbf{Evaluation metrics}. Following previous works, we evaluate uncertainty methods by measuring their ability to predict the correctness of model responses using the Area Under the Receiver Operating Characteristic curve (AUROC). Higher AUROC values indicate better performance of the uncertainty estimator, as they reflect a stronger alignment between the estimated uncertainty scores and the actual correctness of the generated answers. Specifically, we define correctness based on the most likely generation (i.e., top-1 response $y_1^*$), in line with prior studies on uncertainty estimation in LLMs \cite{kuhn2023semantic, malinin2020uncertainty, qiu2024semantic}. 


We note that some works distinguish between \textit{uncertainty estimation} over the input prompt and \textit{confidence scoring} for individual responses \cite{lin2023generating}. In our setting, we follow recent benchmarks \cite{kuhn2023semantic, guerreiro2022looking, qiu2024semantic} that evaluate uncertainty based on how well it predicts the correctness of the model’s primary output. For consistency, we adopt the term \textit{uncertainty estimation}, as used in prior work.

To determine whether a generated answer is correct, we follow standard practice by computing the F1 score of the ROUGE-L metric \cite{lin2004rouge} between the generation and the ground-truth reference. A generation is labeled as correct if this score exceeds 0.3, as in prior works. We acknowledge that automatic scoring functions such as ROUGE-L may not fully reflect human judgment. To validate the robustness of our evaluation, we provide additional results in Section~\ref{subsec:abl}, showing how performance varies across different ROUGE-L thresholds. This helps assess the sensitivity of our method and other baselines to the correctness labeling criteria.

\textbf{Sampling}. We follow the same sampling process used in baseline method \cite{qiu2024semantic}. We use diverse beam search sampling with temperature \(\tau=1\) to generate the $N=10$ responses for each question. We utilize the responses to calculate SD, SE, NE, and PE. For Deg baseline, we adopt multinomial sampling as used in the paper \cite{lin2023generating}. ALL and NLL are computed using the probability of the most likely generation. 

While our method relies solely on output probabilities and is therefore less sensitive to semantic variation, we acknowledge that the choice of decoding strategy can influence the resulting probability distribution. Exploring alternative generation methods, such as nucleus sampling or temperature-controlled sampling, may affect uncertainty estimation and is an important direction we leave for future work. We have added this point to the Limitations section for completeness.


\textbf{Hyperparameter selection}. For the other methods, we adopt the hyperparameters recommended in their respective papers. For our method, PRO, we perform a grid search for the optimal $\alpha$ for each configuration using separate validation sets, comprising 100 samples. We then apply the best-found $\alpha$ to the standard test sets. We report the test results and analyze the effect of this hyperparameter ($\alpha$) in Section~\ref{subsec:bench} and Section~\ref{subsec:abl}, respectively, where we also list the final values of $\alpha$ used in each experiment.

\textbf{Computing Resources}. All experiments are conducted on a single GPU of H100 80GB.

\subsection{Benchmarking Results}\label{subsec:bench}

The results are summarized in Table~\ref{tab:results}. Our method outperforms all other baselines in 11 out of 15 cases, demonstrating its superior performance. On average, it achieves a 2.4\% improvement across the three datasets. Notably, while our method performs similarly to SD \cite{qiu2024semantic} on TriviaQA, it significantly enhances performance on SciQ and NQ, with improvements of 3.3\% and 6.5\%, respectively. These gains highlight the robustness and scalability of our approach. Furthermore, our method, which can be considered a generalized version of NLL, consistently outperforms NLL by 0.3\% on NQ and achieves a notable enhancement of 3.4\% over the TriviaQA and SciQ. This demonstrates that our method matches existing approaches' performance and introduces tangible benefits, especially in scenarios where other models struggle. As illustrated in Appendix~\ref{app:examples}, NLL may fail to detect uncertainty when the top-1 generation receives a high probability despite the presence of multiple diverse and equally plausible alternatives, leading to overconfident yet incorrect predictions.

Interestingly, the results also indicate that a larger model within the same model family does not always yield better performance in uncertainty estimation. For example, with Gemma, the smaller model (Gemma-2B) outperforms the larger one (Gemma-7B), while the reverse is true for Falcon, where Falcon-11B performs better than Falcon-40B. This suggests that model size alone does not guarantee improved uncertainty estimation, as larger models may overfit or generalize poorly on certain tasks.
This trend aligns with recent findings \citep{wen2024enhancing, lamba2025investigating}, which note that smaller models can appear to perform better because their hallucinations are easier to detect, whereas stronger models hallucinate less but more subtly, leading to lower AUROC.

\subsection{Model Analysis and Ablation Study}\label{subsec:abl}
\begin{figure*}[ht]
    \centering
    \includegraphics[width=\textwidth]{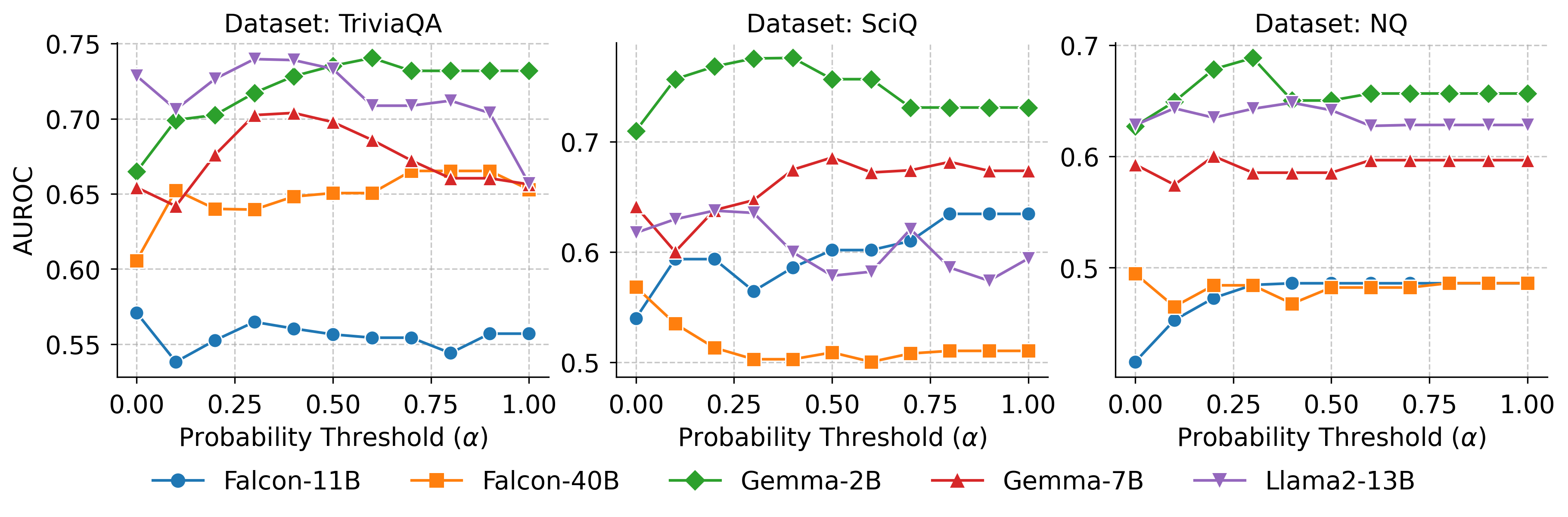}
    \caption{AUC performance when adjusting \(\alpha\) for various models and datasets.}
    \label{fig:auc_vs_alpha}
\end{figure*}

\begin{table*}[ht]
  \centering
  \begin{tabular}{lccccccccccc}
    \toprule
    $K$ & $1$ & $2$ & $3$ & $4$ & $5$ & $6$ & $7$ & $8$ & $9$ & $10$ & \textbf{PRO}\\
    \midrule
    Gemma-2B      & \underline{0.806} & 0.803 & 0.795 & 0.783 & 0.780 & 0.778 & 0.781 & 0.780 & 0.786 & 0.788 & \textbf{0.819}       \\
    Gemma-7B    & 0.812 & \underline{0.834} & 0.831 & 0.825 & 0.810 & 0.815 & 0.826 & 0.827 & 0.819 & 0.813 & \textbf{0.841}      \\
    Llama2-13B    & 0.684 & 0.795 & 0.801 & 0.800 & 0.797 & 0.799 & 0.802 & \textbf{0.809} & 0.803 & \underline{0.806} & 0.802       \\
    \bottomrule

  \end{tabular}
  \caption{\label{tab:effect-k}
    AUC performance comparison between approaches using a fixed top-\(K\) ($K=1,2,..., 10$) and PRO with adaptive constraint $\alpha$ on TriviaQA dataset. The best scores for each setting are bolded, while the second-best scores are underlined. $K=1$ represents NLL baseline.
  }
\end{table*}

\begin{figure*}[ht] 
    \centering
    \includegraphics[width=\textwidth]{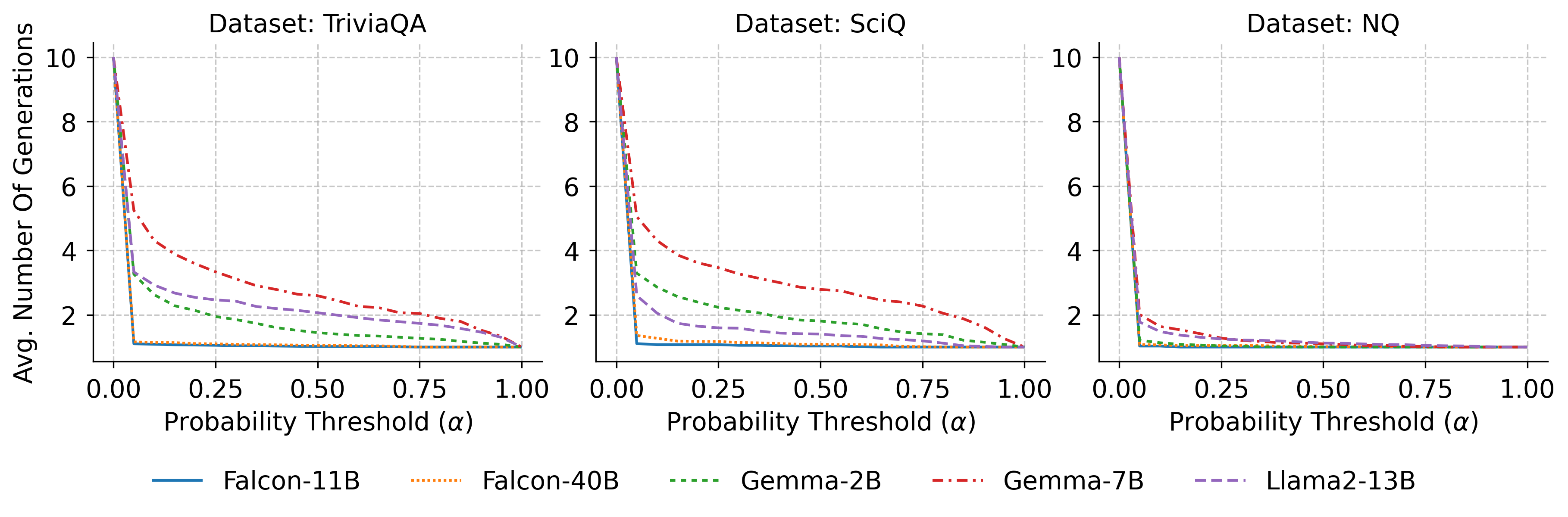}  
    \caption{Relationship between number of selected generations and \(\alpha\) across different models and datasets.}
    \label{fig:k_vs_alpha}
\end{figure*}
\textbf{Performance when adjusting the hyperparameter}

We report the change in performance when changing $\alpha$ on the validation set in Figure~\ref{fig:auc_vs_alpha}. 
While performance differences exist across models, the trends are model- and dataset-specific, highlighting the need for adaptive $\alpha$ selection for each setting.
Our findings suggest that the optimal \(\alpha\) varies considerably, as it is influenced not only by the model being used but also by the unique properties of each question-answer pair, including complexity, length, and contextual nuances. Note that we only tune $\alpha$ on a small validation set. The best ones are used for much bigger test sets. For example, with Gemma-7B and TriviaQA dataset, we choose $\alpha=0.3$ (see Figure~\ref{fig:auc_vs_alpha}'s first plot). The results remain robust on the test set, as shown in Table~\ref{tab:results}. The detailed $\alpha$ values for all model–dataset pairs can be found in Section~\ref{subsec:abl} for further reference.

\textbf{Effect of using more generations}
\label{subsec:multiple-k}

We investigate the impact of using a fixed top-\(K\) approach across models of similar sizes but from different families. Specifically, we compare their performance on TriviaQA dataset, where our method demonstrates superior performance over other variants of Eq. \eqref{eq:approx_final} with fixed values of $K$, including NLL ($K=1$). The results, summarized in Table~\ref{tab:effect-k}, underscore the effectiveness of applying an adaptive constraint compared to a fixed \(K\), reinforcing the importance of dynamically adjusting the number of considered generations. Using multiple generations (e.g., $K=2$) enables the method to capture a broader range of plausible responses per question, leading to more stable selection and improved performance. While fixed-$K$ methods can outperform our approach in isolated cases, they generally yield lower average performance and may lack robustness when applied across different models or datasets. In contrast, our method employs an adaptive threshold (i.e., $\alpha$) to dynamically determine the number of generations considered \textit{per question}. This adaptability allows the method to retain more diverse responses when the model exhibits uncertainty, while effectively filtering out noise when the model is confident. As illustrated in Appendix~\ref{app:examples}, examples with large probability gaps between top-ranked generations further highlight how model confidence varies, emphasizing the benefits of an adaptive approach in handling such scenarios. Additionally, we provide recommended values of $\alpha$ selected via grid search for each model and dataset in Section~\ref{subsec:abl}. When validation data is unavailable, we suggest a conservative default of $\alpha=0.4$, which might performs robustly across settings.


\textbf{$K$--$\alpha$ relationship}

In Figure~\ref{fig:k_vs_alpha}, we further analyze the relationship between the number of selected generations $K$ and $\alpha$ when varying $\alpha$ across models and datasets. As \(\alpha\) increases, the number of generations generally decreases. Notably, \(\alpha\) decreases more rapidly than \(k\), particularly in the first bucket of values ranging from 0 to 0.1. Moreover, the observed downward trend is also consistent across models within the same family, such as Falcon and Gemma. This reduction in selected generations can be attributed to a significant probability gap between the most likely generation and the others. In certain cases, using only the probability of the most likely generation (\(\alpha=p^*_1\)) may suffice; however, this approach generally underperforms across diverse datasets and models, lacking consistency and generalization. This is particularly evident in the TriviaQA and SciQ datasets with Falcon models, where our method produces results similar to NLL, as indicated in Table~\ref{tab:results}.

\textbf{Choice of $\alpha$}\label{app:alpha}

\begin{table}[H]
  \centering
  \begin{tabular}{cccc}
    \toprule
    \textbf{Model} & \textbf{TriviaQA} & \textbf{SciQ} & \textbf{NQ} \\
    \midrule
    Gemma-2B       & 0.40 & 0.40 & 0.70 \\
    Gemma-7B       & 0.30 & 0.50 & 0.05 \\
    Llama2-13B     & 0.10 & 0.35 & 0.30 \\
    Falcon-11B     & 0.85 & 0.80 & 0.35 \\
    Falcon-40B     & 0.65 & 0.90 & 0.45 \\
    \bottomrule
  \end{tabular}
  \caption{Values of $\alpha$ used for each setting.}
  \label{tab:alpha}
\end{table}

Table~\ref{tab:alpha} shows the values of the hyperparameter $\alpha$ selected via grid search for each combination of model and dataset. These values are determined using validation sets drawn from the training data and applied during evaluation.
While some larger or well-calibrated models (e.g., Llama2-13B) perform well with lower $\alpha$ values (e.g., 0.1-0.3), others such as Falcon-11B and Falcon-40B may require higher thresholds (e.g., 0.8-0.9). When validation data is unavailable, we recommend setting $\alpha$ between 0.3 and 0.5, with $\alpha = 0.4$ as a strong default. 

\textbf{Effect of correctness threshold}

To understand how different correctness criteria affect evaluation, we report the Area Under Curve (AUC) performance across a range of ROUGE-L F1-score thresholds from 0.1 to 0.5 (Table~\ref{tab:rouge_threshold}). These thresholds define how strictly an answer must match the reference to be counted as correct. We focus on this range as it aligns with standard practice in evaluating free-form generation, where lexical variation is expected and exact matches are not strictly required.

Across all thresholds, our method (PRO) consistently achieves the highest performance, with notable gains in the 0.3–0.5 range, which is commonly used in real-world evaluation. These results underscore PRO’s effectiveness under realistic correctness assumptions.

\begin{table}[t]
  \centering
  \begin{tabular}{ccccccc}
    \toprule
    \textbf{Method} & 0.1 & 0.2 & 0.3 & 0.4 & 0.5 \\
    \midrule
    SD      & 0.741     & 0.742 & 0.741 & 0.739 & 0.750         \\
    SE      & 0.628     & 0.628 & 0.622 & 0.624 & 0.636      \\
    Deg     & 0.721     & 0.723 & 0.699 & 0.712 & 0.716        \\
    NE      & 0.659     & 0.658 & 0.658 & 0.648 & 0.657       \\
    PE      & 0.649     & 0.659 & 0.678 & 0.691 & 0.708        \\
    ALL     & \underline{0.751} & \underline{0.755} & \underline{0.765} & 0.784 & 0.818       \\
    NLL     & 0.726 & 0.736 & 0.755 & \underline{0.792} & \underline{0.823}        \\
    PRO    & \textbf{0.757} & \textbf{0.763} & \textbf{0.787} & \textbf{0.802} & \textbf{0.832}       \\
    \bottomrule


  \end{tabular}
  \caption{
    AUC performance comparison under varying correctness thresholds (ROUGE-L F1 from 0.1 to 0.5) for Gemma-7B on SciQ. The best scores for each threshold are bolded, and second-best scores are underlined.}
    \label{tab:rouge_threshold}
\end{table}

\section{Conclusion}
In this work, we studied the problem of uncertainty estimation in LLMs and introduced a generalized and robust measure of predictive uncertainty based on top \(K\) generations. Our method is both simple and effective, relying exclusively on output probabilities without incorporating additional features, such as semantic similarity between responses. Our approach dynamically adjusts the number of generated samples based on probability distributions, providing greater flexibility in uncertainty estimation. These improvements strengthen the interpretability and robustness of uncertainty quantification in LLMs, contributing to more reliable and trustworthy model predictions. 


\section*{Limitations}

Although our approach demonstrates strong performance in uncertainty estimation, it relies on access to token logits, which may not always be available in fully black-box LLMs. As a result, our method is currently more suitable for grey-box models where token-level probability distributions can be obtained. Additionally, our approach does not consider the meaning of the question or answer, nor their relationship in measuring uncertainty. One possible direction for further research is to integrate semantic-aware features to refine uncertainty estimation and validate the approach across diverse real-world applications. Another limitation lies in our use of beam search for candidate generation. While it ensures consistent top-mode outputs and aligns with prior work, it may introduce bias due to limited generation diversity. In future work, we plan to investigate the effect of alternative decoding strategies, such as sampling-based methods, on the quality and robustness of uncertainty estimation.


\bibliography{aaai2026}

@inproceedings{qiu2024semantic,
    title={Semantic Density: Uncertainty Quantification for Large Language Models through Confidence Measurement in Semantic Space},
    author={Qiu, Xin and Miikkulainen, Risto},
    booktitle={The Thirty-eighth Annual Conference on Neural Information Processing Systems},
    year={2024},
}

@article{touvron2023llama,
  title={Llama 2: Open foundation and fine-tuned chat models},
  author={Touvron, Hugo and Martin, Louis and Stone, Kevin and Albert, Peter and Almahairi, Amjad and Babaei, Yasmine and Bashlykov, Nikolay and Batra, Soumya and Bhargava, Prajjwal and Bhosale, Shruti and others},
  journal={arXiv preprint arXiv:2307.09288},
  year={2023}
}

@article{achiam2023gpt,
  title={GPT-4 technical report (2023)},
  author={Achiam, Josh and Adler, Steven and Agarwal, Sandhini and Ahmad, Lama and Akkaya, Ilge and Aleman, Florencia Leoni and Almeida, Diogo and Altenschmidt, Janko and Altman, Sam and Anadkat, Shyamal and others},
  journal={arXiv preprint arXiv:2303.08774},
  year={2023}
}

@article{team2023gemini,
  title={Gemini: a family of highly capable multimodal models},
  author={Anil, Rohan and Borgeaud, Sebastian and Alayrac, Jean-Baptiste and Yu, Jiahui and Soricut, Radu and Schalkwyk, Johan and Dai, Andrew M and Hauth, Anja and Millican, Katie and others},
  journal={arXiv preprint arXiv:2312.11805},
  year={2023}
}

@inproceedings{manakul2023selfcheckgpt,
  title={SelfCheckGPT: Zero-Resource Black-Box Hallucination Detection for Generative Large Language Models},
  author={Manakul, Potsawee and Liusie, Adian and Gales, Mark},
  booktitle={Proceedings of the 2023 Conference on Empirical Methods in Natural Language Processing},
  pages={9004--9017},
  year={2023}
}

@inproceedings{xiao2021hallucination,
  title={On Hallucination and Predictive Uncertainty in Conditional Language Generation},
  author={Xiao, Yijun and Wang, William Yang},
  booktitle={Proceedings of the 16th Conference of the European Chapter of the Association for Computational Linguistics: Main Volume},
  pages={2734--2744},
  year={2021}
}

@inproceedings{malinin2020uncertainty,
  title={Uncertainty Estimation in Autoregressive Structured Prediction},
  author={Malinin, Andrey and Gales, Mark},
  booktitle={International Conference on Learning Representations},
  year={2021}
}

@inproceedings{guerreiro2022looking,
  title={Looking for a Needle in a Haystack: A Comprehensive Study of Hallucinations in Neural Machine Translation},
  author={Guerreiro, Nuno M and Voita, Elena and Martins, Andr{\'e} FT},
  booktitle={Proceedings of the 17th Conference of the European Chapter of the Association for Computational Linguistics},
  pages={1059--1075},
  year={2023}
}

@inproceedings{azaria2023internal,
  title={The Internal State of an LLM Knows When It's Lying},
  author={Azaria, Amos and Mitchell, Tom},
  booktitle={The 2023 Conference on Empirical Methods in Natural Language Processing},
  year={2023}
}

@inproceedings{chuang2024lookback,
  title={Lookback Lens: Detecting and Mitigating Contextual Hallucinations in Large Language Models Using Only Attention Maps},
  author={Chuang, Yung-Sung and Qiu, Linlu and Hsieh, Cheng-Yu and Krishna, Ranjay and Kim, Yoon and Glass, James R},
  booktitle={EMNLP},
  year={2024}
}

@article{chen2024inside,
  title={INSIDE: LLMs' Internal States Retain the Power of Hallucination Detection},
  author={Chen, Chao and Liu, Kai and Chen, Ze and Gu, Yi and Wu, Yue and Tao, Mingyuan and Fu, Zhihang and Ye, Jieping},
  journal={CoRR},
  year={2024}
}

@article{van2022mutual,
  title={Mutual information alleviates hallucinations in abstractive summarization},
  author={Van der Poel, Liam and Cotterell, Ryan and Meister, Clara},
  journal={arXiv preprint arXiv:2210.13210},
  year={2022}
}

@article{lindley1956measure,
  title={On a measure of the information provided by an experiment},
  author={Lindley, Dennis V},
  journal={The Annals of Mathematical Statistics},
  volume={27},
  number={4},
  pages={986--1005},
  year={1956},
  publisher={Institute of Mathematical Statistics}
}

@article{lin2023generating,
  title={Generating with confidence: Uncertainty quantification for black-box large language models},
  author={Lin, Zhen and Trivedi, Shubhendu and Sun, Jimeng},
  journal={arXiv preprint arXiv:2305.19187},
  year={2023}
}

@inproceedings{duan2023shifting,
  title={Shifting Attention to Relevance: Towards the Predictive Uncertainty Quantification of Free-Form Large Language Models},
  author={Duan, Jinhao and Cheng, Hao and Wang, Shiqi and Zavalny, Alex and Wang, Chenan and Xu, Renjing and Kailkhura, Bhavya and Xu, Kaidi},
  booktitle={Proceedings of the 62nd Annual Meeting of the Association for Computational Linguistics (Volume 1: Long Papers)},
  pages={5050--5063},
  year={2024}
}

@inproceedings{kuhn2023semantic,
  title={Semantic Uncertainty: Linguistic Invariances for Uncertainty Estimation in Natural Language Generation},
  author={Kuhn, Lorenz and Gal, Yarin and Farquhar, Sebastian},
  booktitle={The Eleventh International Conference on Learning Representations},
  year={2023}
}

@inproceedings{bakman2024mars,
  title={Mars: Meaning-aware response scoring for uncertainty estimation in generative llms},
  author={Bakman, Yavuz Faruk and Yaldiz, Duygu Nur and Buyukates, Baturalp and Tao, Chenyang and Dimitriadis, Dimitrios and Avestimehr, Salman},
  booktitle={Proceedings of the 62nd Annual Meeting of the Association for Computational Linguistics (Volume 1: Long Papers)},
  pages={7752--7767},
  year={2024}
}

@article{zhang2023enhancing,
  title={Enhancing uncertainty-based hallucination detection with stronger focus},
  author={Zhang, Tianhang and Qiu, Lin and Guo, Qipeng and Deng, Cheng and Zhang, Yue and Zhang, Zheng and Zhou, Chenghu and Wang, Xinbing and Fu, Luoyi},
  journal={arXiv preprint arXiv:2311.13230},
  year={2023}
}

@article{farquhar2024detecting,
  title={Detecting hallucinations in large language models using semantic entropy},
  author={Farquhar, Sebastian and Kossen, Jannik and Kuhn, Lorenz and Gal, Yarin},
  journal={Nature},
  volume={630},
  number={8017},
  pages={625--630},
  year={2024},
  publisher={Nature Publishing Group UK London}
}

@article{nikitin2024kernel,
  title={Kernel language entropy: Fine-grained uncertainty quantification for LLMs from semantic similarities},
  author={Nikitin, Alexander and Kossen, Jannik and Gal, Yarin and Marttinen, Pekka},
  journal={Advances in Neural Information Processing Systems},
  volume={37},
  pages={8901--8929},
  year={2024}
}

@article{he2020deberta,
  title={Deberta: Decoding-enhanced bert with disentangled attention},
  author={He, Pengcheng and Liu, Xiaodong and Gao, Jianfeng and Chen, Weizhu},
  journal={arXiv preprint arXiv:2006.03654},
  year={2020}
}

@article{aichberger2024rethinking,
  title={Rethinking Uncertainty Estimation in Natural Language Generation},
  author={Aichberger, Lukas and Schweighofer, Kajetan and Hochreiter, Sepp},
  journal={arXiv preprint arXiv:2412.15176},
  year={2024}
}

@article{vazhentsev2024unconditional,
  title={Unconditional Truthfulness: Learning Conditional Dependency for Uncertainty Quantification of Large Language Models},
  author={Vazhentsev, Artem and Fadeeva, Ekaterina and Xing, Rui and Panchenko, Alexander and Nakov, Preslav and Baldwin, Timothy and Panov, Maxim and Shelmanov, Artem},
  journal={arXiv preprint arXiv:2408.10692},
  year={2024}
}

@article{joshi2017triviaqa,
  title={Triviaqa: A large scale distantly supervised challenge dataset for reading comprehension},
  author={Joshi, Mandar and Choi, Eunsol and Weld, Daniel S and Zettlemoyer, Luke},
  journal={arXiv preprint arXiv:1705.03551},
  year={2017}
}

@article{welbl2017crowdsourcing,
  title={Crowdsourcing multiple choice science questions},
  author={Welbl, Johannes and Liu, Nelson F and Gardner, Matt},
  journal={arXiv preprint arXiv:1707.06209},
  year={2017}
}

@article{kwiatkowski2019natural,
  title={Natural questions: a benchmark for question answering research},
  author={Kwiatkowski, Tom and Palomaki, Jennimaria and Redfield, Olivia and Collins, Michael and Parikh, Ankur and Alberti, Chris and Epstein, Danielle and Polosukhin, Illia and Devlin, Jacob and Lee, Kenton and others},
  journal={Transactions of the Association for Computational Linguistics},
  volume={7},
  pages={453--466},
  year={2019},
  publisher={MIT Press One Rogers Street, Cambridge, MA 02142-1209, USA journals-info~…}
}

@article{almazrouei2023falcon,
  title={The falcon series of open language models},
  author={Almazrouei, Ebtesam and Alobeidli, Hamza and Alshamsi, Abdulaziz and Cappelli, Alessandro and Cojocaru, Ruxandra and Debbah, M{\'e}rouane and Goffinet, {\'E}tienne and Hesslow, Daniel and Launay, Julien and Malartic, Quentin and others},
  journal={arXiv preprint arXiv:2311.16867},
  year={2023}
}

@article{team2024gemma,
  title={Gemma: Open models based on gemini research and technology},
  author={Mesnard, Thomas and Hardin, Cassidy and Dadashi, Robert and Bhupatiraju, Surya and Pathak, Shreya and Sifre, Laurent and Rivi{\`e}re, Morgane and Kale, Mihir Sanjay and Love, Juliette and others},
  journal={arXiv preprint arXiv:2403.08295},
  year={2024}
}

@inproceedings{lin2004rouge,
  title={Rouge: A package for automatic evaluation of summaries},
  author={Lin, Chin-Yew},
  booktitle={Text summarization branches out},
  pages={74--81},
  year={2004}
}

@article{huang2025survey,
  title={A survey on hallucination in large language models: Principles, taxonomy, challenges, and open questions},
  author={Huang, Lei and Yu, Weijiang and Ma, Weitao and Zhong, Weihong and Feng, Zhangyin and Wang, Haotian and Chen, Qianglong and Peng, Weihua and Feng, Xiaocheng and Qin, Bing and others},
  journal={ACM Transactions on Information Systems},
  volume={43},
  number={2},
  pages={1--55},
  year={2025},
  publisher={ACM New York, NY}
}

@inproceedings{wen2024enhancing,
  title={Enhancing Reasoning to Adapt Large Language Models for Domain-Specific Applications},
  author={Wen, Bo and Zhang, Xin},
  booktitle={Annual Conference on Neural Information Processing Systems},
  year={2024}
}

@article{lamba2025investigating,
  title={Investigating Symbolic Triggers of Hallucination in Gemma Models Across HaluEval and TruthfulQA},
  author={Lamba, Naveen and Tiwari, Sanju and Gaur, Manas},
  journal={arXiv preprint arXiv:2509.09715},
  year={2025}
}

\appendix
\setcounter{secnumdepth}{2}
\section{Appendix}\label{sec:appendix}

\subsection{Proof of Proposition \ref{prop:prop1}}
\label{sec:prop_proof}
In this section, we present a detailed proof of Proposition \ref{prop:prop1} as follows:

\begin{align}
    H(Y|x) &= -\sum_{i=1}^{\infty} p(y_i | x) \log p(y_i | x)  \label{eq:approx1} \\ 
           &= -\sum_{i=1}^{K} p^*_i \log p^*_i - \sum_{i=K+1}^{\infty} p^*_i \log p^*_i  \label{eq:approx2} \\ 
           &\geq -\sum_{i=1}^{K} p^*_i \log p^*_i - \log p^*_K \sum_{i=K+1}^{\infty} p^*_i  \label{eq:approx3} \\  
           &= -\sum_{i=1}^{K} p^*_i \log p^*_i - \log p^*_K (1 - \sum_{i=1}^{K} p^*_i)  \label{eq:approx4} \\  
           &= -\log p^*_K - \sum_{i=1}^{K} p^*_i \log \frac{p^*_i}{p^*_K}  \label{eq:approx_final_app}
\end{align}

Eq. \eqref{eq:approx3} follows directly from our definition of the top \(K\) generations. Subsequently, Eq. \eqref{eq:approx4} is derived by applying the property that the total probability sums to 1. 

\subsection{Illustrative examples}\label{app:examples}
This section provides illustrative examples demonstrating how our method more effectively estimates uncertainty in practice. These cases highlight instances where a fixed number of generations is insufficient, whereas our approach better captures confidence variations and distinguishes between likely and unlikely outcomes. Additional quantitative results supporting these observations are provided in Table~\ref{tab:examples}. 

\begin{table*}[ht]
  \centering
  \begin{tabular}{clc}
    \toprule
    \textbf{Q\&A} & \textbf{Output Sequence} & \textbf{\(p(y_i^*|x)\)} \\
    \midrule
    \midrule
    \textbf{Question}   & \textbf{Who is the sixth president of the United States?} & -        \\
    \textit{Ground Truth}    & \textit{John Quincy Adams}                           & -        \\
    \midrule
    \(y_1^*\)    & John Quincy Adams (\textbf{correct})                   & 0.455       \\
    \(y_2^*\)    & John Quincy Adams                                                & 0.455         \\
    \(y_3^*\)    & John Quincy Adams                                                & 0.455         \\
    \(y_4^*\)    & John Adams                                                       & 0.159       \\
    \(y_5^*\)    & Andrew Jackson                                                   & 0.065       \\
    \(y_6^*\)    & Andrew Jackson                                                   & 0.065       \\
    \(y_7^*\)    & Andrew Jackson                                                   & 0.065       \\
    \(y_8^*\)    & James Madison                                                    & 0.015       \\
    \(y_9^*\)    & George Washington                                                & 0.005       \\
    \(y_{10}^*\)   & John Adams, President of the United States from 1797 to 1801     & $1e^{-6}$       \\
    \midrule
    \textbf{$K=1$} & - & 0.788 \\ 
    \textbf{$K=2$} & - & 0.788 \\ 
    \textbf{$K=3$} & - & 0.788 \\ 
    \textbf{PRO ($\alpha=0.1$)} & - & \textbf{0.404} \\

    \midrule
    \textbf{Question}   & \textbf{Who plays whitey bulger's girlfriend in black mass?} & -        \\
    \textit{Ground Truth}    & \textit{actress Dakota Johnson}                           & -        \\
    \midrule
    \(y_1^*\)    & Sienna Miller (\textbf{wrong})                                & 0.144       \\
    \(y_2^*\)    & Sienna Miller                                                            & 0.144       \\
    \(y_3^*\)    & Juno Temple                                                              & 0.118       \\
    \(y_4^*\)    & Juno Temple                                                              & 0.118       \\
    \(y_5^*\)    & Dakota Johnson                                                           & 0.103       \\
    \(y_6^*\)    & Dakota Johnson                                                           & 0.103       \\
    \(y_7^*\)    & Noomi Rapace                                                             & 0.028       \\
    \(y_8^*\)    & Juliette Lewis                                                           & 0.020       \\
    \(y_9^*\)    & Julianne Nicholson                                                       & 0.019      \\
    \(y_{10}^*\) & Winona Ryder, 1990, 1991, 1992, 1993, 1994                               & $1e^{-10}$       \\
    \midrule
    K=1 & - & 1.935 \\ 
    K=2 & - & 1.935 \\ 
    K=3 & - & 2.081 \\ 
    \textbf{PRO ($\alpha=0.1$)} & - & \textbf{2.142} \\

    \midrule
    \textbf{Question}   & \textbf{Which country has the most coastline in the world?} & -        \\
    \textit{Ground Truth}    & \textit{Canada}                           & -        \\
    \midrule
    \(y_1^*\)    & Russia (\textbf{wrong})                               & 0.147       \\
    \(y_2^*\)    & Canada                                                           & 0.136        \\
    \(y_3^*\)    & Canada                                                           & 0.136        \\
    \(y_4^*\)    & Indonesia                                                        & 0.114       \\
    \(y_5^*\)    & Indonesia                                                        & 0.114       \\
    \(y_6^*\)    & Norway                                                           & 0.056       \\
    \(y_7^*\)    & Brazil                                                           & 0.044       \\
    \(y_8^*\)    & United States of America                                         & 0.007       \\
    \(y_9^*\)    & Russia, 17,                                                      & $2e^{-5}$      \\
    \(y_{10}^*\) & Australia, with 36,737 km of coastline                           & $2e^{-8}$       \\
    \midrule
    \textbf{$K=1$} & - & 1.921 \\ 
    \textbf{$K=2$} & - & 1.987 \\ 
    \textbf{$K=3$} & - & 1.987 \\ 
    \textbf{PRO ($\alpha=0.1$)} & - & \textbf{2.087} \\
        
    \bottomrule

  \end{tabular}
  \caption{\label{tab:examples}
    Examples of output sequences for NQ questions using Gemma-7B. The bolded scores indicate that our method better captures uncertainty estimation, whereas using a fixed number of generations is insufficient.
  }
\end{table*}

\subsection{Model And Data Appendix} \label{app:item_urls}
We list the links to the LLM models and datasets in Table~\ref{tab:item_urls}. 

\begin{table*}[t]
\centering
\begin{tabular}{l|l}
\midrule
\textbf{Models/Datasets} & \textbf{URL} \\
\midrule
Gemma-2B & \url{https://huggingface.co/google/gemma-2b} \\
Gemma-7B & \url{https://huggingface.co/google/gemma-7b}\\
Llama2-13B & \url{https://huggingface.co/meta-llama/Llama-2-13b-chat-hf}\\
Falcon-11B & \url{https://huggingface.co/tiiuae/falcon-11B} \\
Falcon-40B & \url{https://huggingface.co/tiiuae/falcon-40B} \\
TriviaQA & \url{https://huggingface.co/datasets/mandarjoshi/trivia_qa} \\
SciQ & \url{https://github.com/launchnlp/LitCab/tree/main/sciq} \\
NQ & \url{https://github.com/launchnlp/LitCab/tree/main/NQ} \\
\midrule
\end{tabular}
\caption{Models and Datasets Details.}
\label{tab:item_urls}
\end{table*}

\end{document}